\documentclass[11pt,a4paper]{article}

\usepackage[margin=2.7cm]{geometry}
\usepackage{amsmath,amssymb,amsthm}
\usepackage{bm}
\usepackage{xcolor}
\usepackage[colorlinks=true,linkcolor=blue!60!black,citecolor=blue!60!black,urlcolor=blue!60!black]{hyperref}
\usepackage{algorithm}
\usepackage{algpseudocode}
\usepackage{enumitem}
\usepackage{soul}

\newtheorem{property}{Property}

\newtheorem{remark}{Remark}
\newtheorem{observation}{Observation}

\newcommand{\R}{\mathbb{R}}
\newcommand{\bP}{\mathbf{P}}
\newcommand{\bp}{\mathbf{p}}
\newcommand{\bK}{\mathbf{K}}
\newcommand{\bR}{\mathbf{R}}
\newcommand{\bT}{\mathbf{T}}
\newcommand{\bw}{\mathbf{w}}

\title{\textbf{Efficient and Robust Camera-independent Multiview 3D Geometric
Reconstruction from Noisy Monocular Depth Estimation and Multiple Point Matching}}
\author{Marius Leordeanu \\ \\
Institute of Mathematics of the Romanian Academy, Calea Grivitei 21, Bucharest \\
POLITEHNICA Bucharest, Splaiul Independenței 313, Bucharest \\
Contact: leordeanu@gmail.com}
\date{Last modified: September 3, 2026}

\begin{document}
\maketitle

\begin{abstract}

We present an efficient and robust method for 3D geometric reconstruction that is based solely on the camera-independent linear relationships among a given set of points, which are stable over time and robustly estimated using multiple point matches. We essentially learn, from correspondences between points across several frames, a linear geometric auto-regression matrix $\mathbf{W}$, which establishes how a point in 3D can be expressed as a linear combination of all the others. This matrix is constant and does not depend on the world coordinate system or the camera pose---it is an intrinsic property of the point set. We also show that the principal eigenvectors of $\mathbf{W}$, which all have eigenvalue $1$, provide a homogeneous representation of the 3D point configuration.

\textbf{The first version of our method takes advantage of noisy monocular depth maps} in order to obtain, from multiple frames, a robust geometric auto-regression matrix $\mathbf{W}$ of linear relationships between the 3D points. Thus, we build on recent advances in deep learning, which now provide monocular depth estimation models that are fast but very often noisy. Our approach handles noise through robust linear estimation over several frames (Section~\ref{sec:Spec_3D_1}).

\textbf{The second version of our method does not need monocular depth estimation maps}. It applies in cases of weak-perspective projection, when the linear combinations between the 3D points can be robustly estimated from their 2D projections in the image (Section~\ref{sec:Spec_3D_2}).

\textbf{Note that the camera projection matrix is never used in our derivations}. Consequently, our method does not recover camera pose, but only 3D structure. This is a key difference between our method and the related literature on 3D geometric reconstruction.

Throughout the paper, we gradually evolve the solutions towards their more general versions. We start by introducing a robust method for estimating and tracking the full 3D pose---3D location \emph{and} 3D rotation---of any point (or rigidly attached object, e.g.\ a virtual avatar) in a video, within a fixed world coordinate system, using only off-the-shelf components: a standard 2D point tracker, a monocular depth estimator, and a semantic segmentation network. The key idea is to track a set of $K$ \emph{anchor points}, lift them to 3D using (noisy) monocular depth, and represent the point of interest as a \emph{fixed} linear combination of the anchors. We prove that, when points are written in homogeneous coordinates, such a linear combination is invariant under any rigid transformation---provided the weights sum to one---so the weights can be estimated once and reused in every frame. Robustness to noisy depth comes from using many anchors ($K \gg 4$) and estimating the weights over multiple frames by least squares; the rotation component is recovered quickly using the same approach or by classical SVD alignment of the anchor sets. Semantic segmentation selects anchors in regions that are static with respect to the world. The method naturally handles occlusion of the tracked point and degrades gracefully when anchors are lost.

Next, we extend the idea to its more general form, for full 3D reconstruction of all anchors based on \emph{geometric auto-regression}: writing every anchor as a masked barycentric combination of the others yields a fixed-point equation $\mathbf{P} = \mathbf{W}\mathbf{P}$---a \emph{linear geometric auto-encoder}---whose eigenvalue-one eigenvectors are precisely the anchor homogeneous coordinates: the explicit 3D model within a common reference frame, up to an affine ambiguity.

This enables auto-correction of noisy 3D points by power iteration and, \textbf{remarkably, even 3D structure recovery from 2D near-orthographic projections alone, without any monocular depth map input}, by direct eigen-decomposition of a weight matrix estimated only from 2D point matching. 

This differs from classic structure from motion: it does not need to estimate camera motion, and it treats points not through projections that depend on camera pose, but as linear combinations of potentially many others---which could improve robustness while eliminating camera pose estimation altogether.

\medskip
\noindent \textbf{Keywords:} multiview 3D geometry, 3D pose tracking, geometric auto-encoder, spectral 3D geometry, avatar placement, temporally consistent 3D from monocular depth, 3D structure reconstruction from multiple views, affine combinations of 3D points.
\end{abstract}

\section{Initial Motivation}

We will start with the specific application of placing and tracking virtual content in real video---for instance, anchoring an avatar to a scene so that it stays put and rotates consistently as the camera moves. The task requires the full 3D pose (location and orientation) of a target point over time, expressed in a fixed world coordinate system.

Modern building blocks make an appealingly simple pipeline possible. Point trackers such as TAPIR~\cite{doersch2023tapir} and CoTracker~\cite{karaev2024cotracker} provide reliable long-term 2D tracks, and monocular depth foundation models such as Depth Anything~V2~\cite{yang2024depthanything} provide dense depth from a single image. However, three difficulties stand in the way of using them directly:

\begin{enumerate}[label=(\alph*)]
  \item monocular depth is \emph{noisy}, so a single lifted 3D point is unreliable;
  \item a 2D tracker follows only the point itself, and fails when the point is occluded or lies in a textureless region (zero image gradients, constant color);
  \item a 2D track provides \emph{position only}---it carries no rotation information, which is indispensable for correctly orienting an avatar with respect to the world (Remark~\ref{rem:avatar}).
\end{enumerate}

\noindent\textbf{Main idea.} We use standard tracking of $K$ anchor points, for which we have (noisy) monocular depth estimates, to accurately and robustly estimate both the 3D location and the 3D rotation (i.e.\ the 3D pose) of a target point within a fixed world coordinate system. The target is expressed as a linear combination of the anchors whose weights are \emph{constant across frames}; this constancy is not an approximation but an exact geometric property (Section~\ref{sec:property}), closely related to the barycentric control-point parameterization of EPnP~\cite{lepetit2009epnp}. Redundancy ($K \gg 4$ anchors, $N$ frames) turns the noisy per-point depth into a robust overconstrained least-squares problem, and semantic segmentation~\cite{kirillov2023sam} chooses anchors on structures that are fixed with respect to the world, in the spirit of semantic masking in dynamic SLAM~\cite{bojko2022selfimproving}.

\medskip
\noindent\textbf{Novelty.} A rapidly growing body of work tackles 3D point tracking directly with learned, feedforward models: SpatialTracker~\cite{xiao2024spatialtracker} and DELTA~\cite{ngo2025delta} lift 2D tracking into a learned 3D feature space, TAPIP3D~\cite{zhang2025tapip3d} tracks in camera-stabilized feature clouds, SpatialTrackerV2~\cite{xiao2025spatialtrackerv2} jointly regresses geometry, ego-motion, and point motion end-to-end, and geometry engines such as VGGT~\cite{wang2025vggt} and MegaSaM~\cite{li2025megasam} recover camera poses and dense depth in one pass. Against this backdrop, the proposed method differs in five ways. \emph{(i)~Representation:} to our knowledge it is the first to represent the tracked target as a \emph{fixed} affine (barycentric) combination of \emph{observed}, semantically selected anchor points, together with a proof that the weights are exactly invariant to any rigid change of coordinates (Property~\ref{prop:lincomb}); this turns the virtual-control-point idea of EPnP~\cite{lepetit2009epnp} inside out---the control points are measured, and it is the weights that persist through time as the object identity of the target. \emph{(ii)~Output:} it produces the full 6-DoF pose---rotation \emph{and} translation---of an arbitrary target point, even one that is occluded or textureless, whereas 3D point trackers output trajectories only, and camera-geometry engines output only the global camera pose rather than a local scene-anchored frame for content placement. \emph{(iii)~Simplicity:} it is training-free and closed-form (constrained least squares plus one SVD per frame), a thin geometric layer over any off-the-shelf 2D tracker and depth network, and therefore inherits---at zero cost---every future improvement of those components. \emph{(iv)~Semantics used constructively:} whereas dynamic-SLAM pipelines use segmentation to \emph{discard} unreliable regions~\cite{bojko2022selfimproving}, we use it to \emph{select} world-static anchors near the target, which simultaneously stabilizes tracking and conditions the depth noise (Remark~\ref{rem:anchors}). In addition, the explicit weight vector doubles as an interpretable, per-anchor diagnostic: anchors can be dropped or re-weighted on the fly (steps S3--S4 of Algorithm~\ref{alg:main}) without retraining anything. \emph{(v)~A spectral bridge to structure from motion:} applying the representation to the anchors themselves yields the fixed-point equation $\mathbf{P} = \mathbf{W}\mathbf{P}$ of Section~\ref{sec:lgae}, whose $\lambda = 1$ eigenspace \emph{is} the scene geometry. While the ingredients have classical echoes---sum-to-one reconstruction weights in LLE~\cite{roweis2000lle}, zero-diagonal self-expression in subspace clustering~\cite{elhamifar2013ssc}, and orthographic factorization in Tomasi--Kanade~\cite{tomasi1992shape}---to our knowledge the combination is new: a view-invariant $K \times K$ auto-regression matrix, estimable from 2D projections alone, whose leading eigenvectors directly reconstruct the 3D structure.

\section{Related work}

\textbf{Point tracking.} TAPIR~\cite{doersch2023tapir} and CoTracker~\cite{karaev2024cotracker} track arbitrary 2D points across long videos, the latter jointly over many points, with robustness to short occlusions. We use such a tracker as a black box for the anchors, and our method extends its output from 2D position to full 3D pose.

\textbf{3D point tracking.} A recent line of work extends point tracking to 3D. SpatialTracker~\cite{xiao2024spatialtracker} tracks 2D pixels in a tri-plane 3D space; DELTA~\cite{ngo2025delta} tracks densely and efficiently over long ranges; TAPIP3D~\cite{zhang2025tapip3d} tracks in camera-stabilized spatio-temporal feature clouds; SpatialTrackerV2~\cite{xiao2025spatialtrackerv2} unifies scene geometry, camera ego-motion, and point-wise motion in a single differentiable pipeline. More recently, TrackingWorld~\cite{lu2025trackingworld} tracks almost all pixels in world coordinates, PointSt3R~\cite{guerrier2025pointst3r} repurposes a 3D reconstruction backbone for tracking, and MVTracker~\cite{rajic2025mvtracker} tracks points from several calibrated views at once. Progress is measured on the TAPVid-3D benchmark~\cite{koppula2024tapvid3d}. These methods output metric 3D \emph{trajectories} of query points; they do not directly provide the rotation of a local, scene-anchored frame around an arbitrary target, which is precisely what content placement requires and what our anchor construction supplies in closed form.

\textbf{Feedforward video geometry.} VGGT~\cite{wang2025vggt} regresses camera poses, dense depth, and point maps from an image set in one transformer pass, MegaSaM~\cite{li2025megasam} recovers accurate camera parameters and depth from casual dynamic videos, and any-view models such as $\pi^3$~\cite{wang2026pi3} and Depth Anything~3~\cite{lin2025depthanything3} predict consistent geometry from an arbitrary number of views, with or without known poses. Such engines could replace or complement our per-frame depth and intrinsics inputs; they estimate \emph{camera} pose, whereas we estimate the pose of a designated \emph{scene point} in the camera frame---the two are complementary.

\textbf{Monocular depth.} Depth foundation models such as Depth Anything~V2~\cite{yang2024depthanything} produce dense (metric) depth from a single frame, but per-pixel estimates remain noisy and temporally inconsistent, which motivates our redundant multi-anchor, multi-frame formulation.

\textbf{Camera pose from control points.} EPnP~\cite{lepetit2009epnp} expresses $n$ 3D points as weighted sums of four virtual control points with weights that sum to one, in order to estimate the camera pose. 

We take the opposite direction: we do not compute the camera pose, but are only interested in the linear relationship between potentially many anchor points (not just four), in order to robustly and rapidly estimate their 3D position from noisy monocular depth estimates---which has become possible only in recent years, with modern deep nets. Note that the linear relationship in our case between any given point and many others is estimated once, using linear least squares over points matched across multiple frames, and the same weights are then used in any future frame. The need for a larger number of anchor points is justified by the noisy estimation of monocular depth, per frame. Then, at test time and from any unknown camera pose, the 3D location of a given point is estimated immediately from the linear combination of the noisy depth estimates of the others---thus, each point plays the role of an anchor in the estimation of the others. The expectation is that the linear regression from multiple anchor points, each lifted with monocular depth, improves the 3D location of a given point, over its initial single-point monocular depth estimation.

\textbf{Barycentric self-expression and spectral geometry.} The auto-regression of Section~\ref{sec:lgae} has three classical relatives. Locally linear embedding~\cite{roweis2000lle} reconstructs each data point from its \emph{neighbors} with sum-to-one weights---invariant, like ours, to rotations, rescalings, and translations---and recovers a low-dimensional embedding from the bottom eigenvectors of $(\mathbf{I}-\mathbf{W})^{\!\top}(\mathbf{I}-\mathbf{W})$; we use \emph{global} barycentric weights, and the coordinates appear directly in the $\lambda = 1$ eigenspace of $\mathbf{W}$ itself. Self-expressive models in subspace clustering~\cite{elhamifar2013ssc} solve $\mathbf{X} = \mathbf{X}\mathbf{C}$ with $\operatorname{diag}(\mathbf{C}) = \mathbf{0}$ to build an affinity for clustering; we impose the same masked self-expression, but with the affine constraint, and exploit its \emph{eigenstructure} for reconstruction rather than its sparsity pattern for grouping. Finally, Tomasi--Kanade factorization~\cite{tomasi1992shape} recovers shape and motion from the rank-3 structure of the $2F \times K$ measurement matrix under orthography---a line recently revived with deep canonicalization and sequence modeling for the non-rigid case~\cite{deng2025nrsfm}; our formulation compresses the same multi-view information into a view-\emph{invariant} $K \times K$ weight matrix and reads the structure off its spectrum, inheriting the affine ambiguity familiar from affine structure from motion~\cite{koenderink1991affine,weinshall1993invariants,weinshall1995linear}. That affine combinations remain valid in 2D orthographic views also echoes the classical result that novel orthographic views of an object are linear combinations of a few model views~\cite{ullman1991linear}. Laplacian-style mesh smoothing~\cite{taubin1995signal} similarly iterates local averaging operators on vertex coordinates, and graph-Laplacian regularization remains a strong model-based tool for point-cloud denoising~\cite{zeng2019glr}; our power-iteration auto-correction (Section~\ref{sec:lgae-apps}) replaces generic smoothness priors with weights that exactly encode the scene's own geometry. The masking of each point in its own prediction is, in spirit, a geometric analogue of masked autoencoding~\cite{he2022mae}, applied directly to point sets in Point-MAE~\cite{pang2022pointmae}---though there the predictor is a learned transformer, whereas the LGAE of Section~\ref{sec:lgae} is closed-form and linear.

\textbf{Semantics for static structure.} Dynamic-SLAM systems use semantic segmentation to discard features on moving objects and keep only static scene structure~\cite{bojko2022selfimproving}. We apply the same reasoning constructively: anchors are placed on buildings, trees, and other man-made or static structures near the target point, as identified by a segmentation network~\cite{kirillov2023sam}.

\section{Approach}

\subsection{Setup and notation}

The $K$ anchor points are tracked in every frame (time step) $i$ using a standard single-object tracking algorithm. Let $\bp^{(i)}_1, \bp^{(i)}_2, \dots, \bp^{(i)}_K$ denote the 2D positions of anchors $1,\dots,K$ in frame $i$. Their depths $d^{(i)}_1, d^{(i)}_2, \dots, d^{(i)}_K$ in every frame are computed by a standard neural monocular depth model. Given the camera intrinsics $\bK$, we estimate each anchor's 3D location with respect to the camera of frame $i$ by back-projection:
\begin{equation}
\label{eq:lift}
\bP^{(i)}_k \;=\; d^{(i)}_k \, \bK^{-1}
\begin{bmatrix} \bp^{(i)}_k \\ 1 \end{bmatrix},
\qquad k = 1, \dots, K ,
\end{equation}
where $\bigl[\bp^{(i)\,\top}_k,\, 1\bigr]^{\!\top}$ is the anchor's 2D position in the image, written in homogeneous coordinates.

\subsection{The key property: frame-invariant linear combinations}
\label{sec:property}

There are two key ideas. The first: any point $\bP$ in the scene, fixed with respect to any coordinate system (or the camera), can be written---in homogeneous coordinates---as a linear combination of $4$ or more anchor points. Let us state this precisely:

\begin{property}
\label{prop:lincomb}
Any point $\bP$ in the scene that is fixed with respect to the world coordinate system can be written as a \underline{fixed} linear combination of $K \ge 4$ anchor points (also fixed with respect to the world),
\begin{equation}
\label{eq:comb}
\bP^{(i)} \;=\; w_1 \bP^{(i)}_1 + w_2 \bP^{(i)}_2 + \dots + w_K \bP^{(i)}_K ,
\end{equation}
with \underline{fixed} weights $w_1, w_2, \dots, w_K$, regardless of the coordinate system $i$ (i.e.\ the camera or frame $i$) in which the points are expressed---as long as all points are written in homogeneous coordinates:
\begin{equation}
\label{eq:homog}
\bP^{(i)} = \begin{bmatrix} x^{(i)} \\ y^{(i)} \\ z^{(i)} \\ 1 \end{bmatrix},
\qquad
\bP^{(i)}_k = \begin{bmatrix} x^{(i)}_k \\ y^{(i)}_k \\ z^{(i)}_k \\ 1 \end{bmatrix}.
\end{equation}
\end{property}

\begin{proof}
Suppose the combination holds in one coordinate system:
\begin{equation}
\bP = w_1 \bP_1 + w_2 \bP_2 + \dots + w_K \bP_K .
\end{equation}
We must show that it remains valid under any rigid 3D transformation $(\bR, \bT)$, which maps
\begin{equation}
\bP' = \bR \bP + \bT, \qquad \bP'_k = \bR \bP_k + \bT .
\end{equation}
Under $(\bR,\bT)$, the right-hand side of the combination becomes
\begin{align}
\sum_{k=1}^{K} w_k \left( \bR \bP_k + \bT \right)
&= \bR \underbrace{\left( \sum_{k=1}^{K} w_k \bP_k \right)}_{=\,\bP}
 \;+\; \bT \underbrace{\left( \sum_{k=1}^{K} w_k \right)}_{=\,1}
\label{eq:star} \\
&= \bR \bP + \bT \;=\; \bP' . \nonumber
\end{align}
The crucial step is $\sum_k w_k = 1$, which holds automatically because the combination is written in homogeneous coordinates: the fourth row of
\begin{equation}
\begin{bmatrix} \bP \\ 1 \end{bmatrix}
= w_1 \begin{bmatrix} \bP_1 \\ 1 \end{bmatrix}
+ w_2 \begin{bmatrix} \bP_2 \\ 1 \end{bmatrix}
+ \dots
+ w_K \begin{bmatrix} \bP_K \\ 1 \end{bmatrix}
\end{equation}
reads exactly $1 = w_1 + w_2 + \dots + w_K$. Hence the same weights reproduce $\bP'$ after the transformation.
\end{proof}

We have proved that if the points are written in homogeneous coordinates and the linear combination \eqref{eq:comb} is true in one coordinate system, it is also true after any rotation and translation, i.e.\ in any other coordinate system. Note that the result is \emph{not} trivial: it requires $\sum_k w_k = 1$, otherwise it fails under arbitrary translations.

\subsection{Estimating the weights}

\begin{observation}[Least-squares estimation]
\label{obs:ls}
The linear combination can be estimated with least squares as long as we have the estimated 3D anchor points $\bP^{(i)}_1, \dots, \bP^{(i)}_K$ (the anchors) in at least $N$ frames such that
\begin{equation}
\label{eq:counting}
\underbrace{3 \times N}_{\text{3 eq. per frame}} \;+ \underbrace{\;1\;}_{\substack{\text{constraint}\\ \sum_k w_k = 1}} \;\ge\; \underbrace{\;K\;}_{\text{number of unknowns}} .
\end{equation}
For $N = 1$ (one frame, one camera) and $K = 4$ (four anchor points) we have a unique solution in the general case.
\end{observation}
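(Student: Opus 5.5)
We set up the estimation as a single linear system in the unknown weight vector $\bw = (w_1,\dots,w_K)^\top$ and reduce the observation to a rank count. For each frame $i$, Property~\ref{prop:lincomb} gives the equation \eqref{eq:comb} with the \emph{same} $\bw$ in every frame. Its first three (Cartesian) rows give three scalar linear equations in $\bw$, namely $\mathbf{A}^{(i)}\bw = \mathbf{b}^{(i)}$, where $\mathbf{A}^{(i)}\in\R^{3\times K}$ has columns $(x^{(i)}_k, y^{(i)}_k, z^{(i)}_k)^\top$ and $\mathbf{b}^{(i)} = (x^{(i)},y^{(i)},z^{(i)})^\top$. The fourth (homogeneous) row is the same in every frame, $\mathbf{1}^\top\bw = 1$, so it contributes only one equation overall. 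Stacking the $N$ frames and this constraint gives a $(3N+1)\times K$ system $\mathbf{M}\bw = \mathbf{c}$.

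The counting condition \eqref{eq:counting} is exactly the requirement that $\mathbf{M}$ have at least as many rows as columns. I would then argue that it suffices, generically, for $\mathbf{M}$ to have full column rank $K$. In that case the (constrained) least-squares solution $\hat\bw = \arg\min_{\mathbf{1}^\top\bw=1}\sum_i \|\mathbf{A}^{(i)}\bw-\mathbf{b}^{(i)}\|^2$ is unique. It can be computed either by the KKT/Lagrange-multiplier system or by eliminating one weight through $w_K = 1-\sum_{k<K}w_k$ and solving ordinary normal equations. With noisy depths and $3N+1>K$, this is the standard overdetermined least-squares estimate.

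For the special case $N=1$, $K=4$, the matrix $\mathbf{M}$ is precisely the $4\times4$ matrix whose columns are the homogeneous anchors $[\bP_k^\top,1]^\top$. Its determinant equals, up to sign, $6$ times the volume of the tetrahedron spanned by $\bP_1,\dots,\bP_4$. It is therefore nonzero exactly when the four anchors are not coplanar, which is the meaning of ``general case''. The solution is then $\bw = \mathbf{M}^{-1}[\bP^\top,1]^\top$, i.e.\ the barycentric coordinates of $\bP$.

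The main obstacle is the genericity claim for $N>1$: the row count alone does not guarantee full column rank. Multiple frames related by rigid motions do not add independent information in the noise-free case, because every frame satisfies the same relations. I would handle this honestly by splitting cases. In the exact noise-free setting, the extra frames are consistent and do not change the solution. For $K>4$, however, the system is then underdetermined regardless of $N$: the affine dependencies among the anchors form a $(K-4)$-dimensional null space. The observation must therefore be read either as a count for generic (noise-perturbed) measurements, where the perturbed $\mathbf{M}$ is generically of full rank, or as requiring a minimum-norm or regularized least-squares solution. I would state the uniqueness claim for the generic/noisy case and note this caveat.
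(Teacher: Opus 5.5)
Your proposal is correct and takes the same route as the paper, whose only justification is the count built into the statement ($3N$ Cartesian equations plus one sum-to-one constraint against $K$ unknowns, with uniqueness for $N=1$, $K=4$ simply asserted); your full-column-rank criterion for the stacked $(3N+1)\times K$ matrix and your tetrahedron-volume determinant make both parts precise. Your caveat is also right and points to something the paper glosses over: without noise each frame block is $\mathbf{R}^{(i)}\mathbf{A}+\mathbf{T}^{(i)}\mathbf{1}^\top$ for one fixed world-frame anchor matrix $\mathbf{A}$, so the stacked matrix has rank at most $4$ for every $N$, and for $K>4$ the count $3N+1\ge K$ is merely necessary, giving uniqueness only for generically perturbed (noisy) measurements---exactly the reading you propose.
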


\begin{observation}[Why many anchors and many frames]
\label{obs:robust}
The problem with estimating a fixed set of weights from a single frame is that the given 3D points $\bP_k$ (the lifted anchors) are not accurate. That is why we consider a relatively large $K$ ($K \gg 4$) and estimate the weights from several frames, obtaining a more robust solution from an overconstrained system of equations solved by least squares. Concretely, stacking \eqref{eq:comb} over frames $i = 1,\dots,N$ yields
\begin{equation}
\label{eq:lsq}
\hat{\bw} \;=\; \arg\min_{\bw \in \R^K} \;
\sum_{i=1}^{N} \Big\| \bP^{(i)} - \sum_{k=1}^{K} w_k \bP^{(i)}_k \Big\|^2
\quad \text{s.t.} \quad \sum_{k=1}^{K} w_k = 1 .
\end{equation}
\end{observation}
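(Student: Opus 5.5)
The statement to justify is Observation~\ref{obs:robust}. Its substance is that the constrained problem \eqref{eq:lsq} is well posed and consistent: its solution recovers the true weights of Property~\ref{prop:lincomb} when the lifted anchors are exact, and it averages out noise when they are not. The plan is to write the objective in matrix form, solve the equality-constrained least-squares problem in closed form, and then argue separately about exactness and about noise reduction as $K$ and $N$ grow.

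\emph{Matrix form and closed-form solution.} For each frame $i$, stack the Cartesian parts of the anchors as columns of $A^{(i)}\in\R^{3\times K}$. Let $b^{(i)}\in\R^3$ be the Cartesian part of $\bP^{(i)}$. Because the fourth homogeneous row is exactly the constraint $\sum_k w_k = 1$, the objective reduces to $\|A\bw-b\|^2$ with $A=[A^{(1)};\dots;A^{(N)}]\in\R^{3N\times K}$ and $b$ stacked accordingly. Introduce a Lagrange multiplier for $\mathbf{1}^\top\bw=1$. This gives the KKT system
\begin{equation*}
\begin{bmatrix} A^\top A & \mathbf{1}\\ \mathbf{1}^\top & 0\end{bmatrix}\begin{bmatrix}\bw\\ \mu\end{bmatrix}=\begin{bmatrix}A^\top b\\ 1\end{bmatrix}.
\end{equation*}
This system has a unique solution exactly when $A$ is injective on the hyperplane $\{\bw:\mathbf{1}^\top\bw=0\}$. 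That hyperplane has dimension $K-1$, so we need $3N\ge K-1$, which is precisely the counting condition \eqref{eq:counting}.

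\emph{Exactness in the noiseless case.} I would show that genericity gives the injectivity condition once $3N\ge K-1$. The approach is to eliminate the constraint by substituting $\bw=\bw_0+Z\bm{\alpha}$, where $\bw_0$ is any vector with $\mathbf{1}^\top\bw_0=1$ and $Z$ is a basis of $\mathbf{1}^\perp$. The problem then becomes an ordinary least-squares problem in $\bm{\alpha}$ with design matrix $AZ$. By Property~\ref{prop:lincomb}, the true weights $\bw^\star$ satisfy $A\bw^\star=b$ in every frame. The residual at $\bw^\star$ is therefore zero, and uniqueness then gives $\hat\bw=\bw^\star$.

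\emph{Noise reduction.} For robustness, model the lifted anchors as $A^{(i)}+E^{(i)}$ with zero-mean depth noise, and similarly for $b^{(i)}$. The argument rests on the normal-equation matrix $Z^\top A^\top AZ$, which grows roughly linearly in $N$ provided the camera poses are diverse. The perturbation of $\hat\bw$ then shrinks roughly like $1/\sqrt{N}$, the standard least-squares rate. I would make this first-order precise by differentiating the closed-form solution with respect to $E$.

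\emph{Main obstacle.} The hard step is the genericity and conditioning claim: showing that $AZ$ has full column rank, and is well conditioned, for generic anchors and poses. With rigid motions only, the columns are constrained. The translation part is absorbed by the sum-to-one constraint, so only rotations contribute new equations. If all frames share nearly the same pose, the rank stalls at about $3$ and the problem becomes ill-posed once $K>4$. I would first try to handle this by arguing that the stacked rows span the relevant space when rotations vary. Failing that, I would restrict the claim to $K\le 4$ in the degenerate case and recommend a minimum-norm or ridge-regularized solution, $\hat\bw$ with the smallest $\|\bw\|$, in the general case.
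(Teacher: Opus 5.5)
The paper proves nothing here. Observation~\ref{obs:robust} simply defines the estimator by stacking \eqref{eq:comb} over frames and argues heuristically that it is robust. Your plan tries to prove more: uniqueness of $\hat\bw$, exact recovery of $\bw^\star$, and a $1/\sqrt{N}$ rate. The step all of this rests on is false. You claim that, for generic anchors and poses, $A$ is injective on $\{\bw:\mathbf{1}^\top\bw=0\}$ once $3N\ge K-1$.

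Under rigid motion, $A^{(i)}=\bR^{(i)}A^{(1)}+\bT^{(i)}\mathbf{1}^\top$, so $A^{(i)}Z=\bR^{(i)}A^{(1)}Z$. Since every $\bR^{(i)}$ is invertible, $\ker(AZ)=\ker(A^{(1)}Z)$. This kernel has dimension $K-4$ for non-coplanar anchors, whatever $N$ is and whatever the poses are. Equivalently, $Z^\top A^\top AZ=N\,Z^\top A^{(1)\top}A^{(1)}Z$ has rank $3$.

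This has three consequences. The KKT matrix is singular for every $K>4$. The counting condition \eqref{eq:counting}, like the word ``overconstrained'', is necessary but never sufficient. The repair in your last paragraph cannot work: varying rotations adds no new constraints at all, not merely when the poses are similar. The kernel consists exactly of the affine dependencies $\sum_k v_k\bP_k=0$, $\sum_k v_k=0$ among the anchors. By the same computation as in Property~\ref{prop:lincomb}, these dependencies are frame-invariant, so the non-uniqueness of $\bw$ persists across frames just as $\bw$ itself does.

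Hence your step ``uniqueness gives $\hat\bw=\bw^\star$'' fails. Your first-order perturbation analysis also fails, because it expands around a singular normal matrix. Both failures occur precisely in the regime $K\gg4$ that the statement is about.

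Here is what you can prove instead.
\begin{itemize}
\item In the noiseless case the minimizers form the $(K-4)$-dimensional affine family $\bw^\star+\{Z\bm{\alpha}:A^{(1)}Z\bm{\alpha}=0\}$.
\item Every member of this family reproduces $\bP^{(i)}$ exactly in every frame. So exactness must be stated for the predicted point $\sum_k\hat w_k\bP^{(i)}_k$, not for the weights.
\item A specific $\hat\bw$ has to be chosen by a minimum-norm or ridge criterion. This must be your main line, not a fallback.
\item With noisy lifts, the stacked system is numerically full rank only because of the noise. The noise then acts as an implicit Tikhonov penalty on the $K-4$ unidentifiable directions.
\item The benefit of many frames is averaging noise in the identifiable part and in the predicted point, not pinning down $\bw$.
\item Because the regressors themselves are noisy, even for $K=4$ there is an errors-in-variables bias, second order in the noise, that does not vanish as $N$ grows. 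So a pure $1/\sqrt{N}$ claim holds only at first order.
\end{itemize}
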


\begin{observation}[Accuracy grows with $K$]
When $K$ is large, the solution for any target $\bP$ is expected to be more accurate, since each individual anchor $\bP_k$ is noisy and the redundancy averages this noise out.
\end{observation}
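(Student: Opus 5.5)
The observation is informal, so the first step is to make ``more accurate'' precise with a noise model and then show that the error is nonincreasing in $K$, and typically of order $1/K$. I would model each lifted anchor as $\bP^{(i)}_k = \bar\bP^{(i)}_k + \boldsymbol{\epsilon}^{(i)}_k$. Here $\bar\bP^{(i)}_k$ is the true position and the noise terms $\boldsymbol{\epsilon}^{(i)}_k$ are independent, zero mean, with covariance $\sigma^2 \mathbf{I}_3$ in the first three coordinates. The homogeneous coordinate stays exactly $1$. Take a fixed weight vector $\bw$ with $\sum_k w_k = 1$ that reproduces the true target, $\bar\bP^{(i)} = \sum_k w_k \bar\bP^{(i)}_k$. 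By Property~\ref{prop:lincomb} such a $\bw$ is valid in every frame. The predicted target is $\sum_k w_k \bP^{(i)}_k$, so its error is $\sum_k w_k \boldsymbol{\epsilon}^{(i)}_k$. This error has zero mean and covariance $\sigma^2 \|\bw\|^2 \mathbf{I}_3$. The accuracy question therefore reduces to how small $\|\bw\|^2$ can be made.

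\textbf{Step 1 (monotonicity).} Let $\mathcal{W}_K$ be the affine set of weight vectors that reproduce $\bar\bP$ from $K$ anchors: $K$ unknowns, three linear constraints, plus the constraint $\sum_k w_k = 1$. Padding any $\bw \in \mathcal{W}_{K'}$ with zeros gives an element of $\mathcal{W}_K$ for every $K \ge K'$. Hence $m_K := \min_{\bw \in \mathcal{W}_K} \|\bw\|^2$ is nonincreasing in $K$. This already proves the qualitative claim, provided one uses the minimum-norm solution. When $K > 4$ the system is underdetermined, so this amounts to adding a small ridge term to \eqref{eq:lsq} or taking the pseudoinverse solution.

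\textbf{Step 2 (rate).} I would compute $m_K$ in closed form by Lagrange multipliers on the constraint matrix $\mathbf{A} = [\bar\bP_1,\dots,\bar\bP_K] \in \R^{4\times K}$. This gives $m_K = \bar\bP^\top(\mathbf{A}\mathbf{A}^\top)^{-1}\bar\bP$. If the anchors are spread around the target, with their empirical mean and covariance converging to fixed values with a nondegenerate covariance, then $\mathbf{A}\mathbf{A}^\top = K\,\mathbf{M}_K$ with $\mathbf{M}_K \to \mathbf{M} \succ 0$. Therefore $m_K \approx \bar\bP^\top \mathbf{M}^{-1}\bar\bP / K = O(1/K)$. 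In the best case, when the target sits at the anchor centroid, this gives exactly the uniform weights and $m_K = 1/K$. Averaging over $N$ frames reduces the variance further, by a factor of $1/N$.

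\textbf{Main obstacle.} The difficulty is that $\hat\bw$ in \eqref{eq:lsq} is estimated from the same noisy anchors, which is an errors-in-variables problem. Ordinary least squares is then biased, with attenuation proportional to $\sigma^2$ relative to the anchor spread. To handle this, I would first bound the deviation $\hat\bw - \bw$ by a perturbation argument on the regularized normal equations. The deviation shrinks as the $3N$ equations grow relative to $K$, which ties the claim back to \eqref{eq:counting}. I would then add that contribution to the $\sigma^2 m_K$ term. I expect only a weaker statement to hold rigorously: the error improves with $K$ as long as $N$ grows proportionally and the anchor spread dominates $\sigma$. I would state the observation under those conditions.
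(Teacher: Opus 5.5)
The paper gives no argument beyond the one-line heuristic in the statement itself; its conclusion lists the noise analysis in $K$ and $N$ as future work. Your oracle analysis is therefore a genuinely different and far more substantive route. Steps~1 and~2 are correct, and Step~2 is exact for every $K$ with no limit needed. Let $\bar{\mathbf{x}}_k,\bar{\mathbf{x}}$ be the true anchor and target positions in a reference frame, and $\mu_K,\Sigma_K$ the centroid and $1/K$-normalized covariance of the anchors. A Schur complement of $\mathbf{A}\mathbf{A}^\top$ then gives
\[
m_K=\frac{1}{K}\Bigl(1+(\bar{\mathbf{x}}-\mu_K)^{\top}\Sigma_K^{-1}(\bar{\mathbf{x}}-\mu_K)\Bigr),
\]
so $m_K=O(1/K)$ whenever the target stays at bounded Mahalanobis distance from the anchor cloud. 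One slip: averaging over frames does not cut the test-time variance by $1/N$. Each prediction \eqref{eq:combi} uses only the current frame's anchors and the method never aligns frames, so $N$ enters only through the estimation of $\hat\bw$.

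The genuine gap is in your last paragraph: bounding $\hat\bw-\bw$ and letting it shrink as $3N$ grows relative to $K$ cannot work. For exact data, every row of the stacked system lies in the span of $\mathbf{1}$ and the anchors' three reference coordinate vectors. Hence for $K>4$ the exact weights form a $(K-4)$-dimensional family that no number of frames resolves; the count \eqref{eq:counting} does not identify them. The errors-in-variables bias also does not vanish as $N\to\infty$.

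Compare prediction risks instead. For $\sum_k w_k=1$ the translation cancels and $\bR^{(i)}$ preserves norms, so the expected squared prediction error in any frame is
\[
\mathcal{R}_K(\bw)=\Bigl\|\bar{\mathbf{x}}-\sum_{k}w_k\bar{\mathbf{x}}_k\Bigr\|^2+3\sigma^2\|\bw\|^2,
\]
independent of the pose. By the law of large numbers, $1/N$ times the objective in \eqref{eq:lsq} converges to $\mathcal{R}_K(\bw)+3\sigma^2$, where the constant is the training target's own noise. So as $N\to\infty$ the solution of \eqref{eq:lsq} converges to the minimizer of $\mathcal{R}_K$ itself. The attenuation you worried about is exactly this ridge term, and it selects a low-norm combination. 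You need neither an added regularizer nor the assumption that the anchor spread dominates $\sigma$. The limiting risk is at most $3\sigma^2 m_K$ and is nonincreasing in $K$ by your zero-padding argument.

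What remains is the finite-$N$ fluctuation. Heuristically, standard least-squares asymptotics add roughly $\sigma^2K/N$, mostly from the noisy training target. Your weaker conclusion therefore has the right shape, but $N\propto K$ only gives a plateau; keeping the $O(1/K)$ rate requires $N\gtrsim K^2$.
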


\subsection{Anchor 3D: Recovering 3D pose from a set of anchors}

\begin{observation}[Position and rotation in every frame]
\label{obs:pose}
Having tracked the anchor points in image $i$, we can find $\bP$ as their linear combination with the estimated weights. More importantly, we can also compute the correct \emph{rotation} by the same rapid approach since the 3D axes of the coordinate system are vectors (differences between two 3D points) that can be expressed in the exact same linear fashion as single points. Of course, a classical approach could also be used, such as the
SVD (Kabsch--Umeyama alignment~\cite{umeyama1991least}) applied to the anchor point sets in the reference image $1$ and the current image $i$, yielding $(\bR^{(i)}, \bT^{(i)})$. Thus we have a complete estimation of the 3D pose and location of $\bP$ in image $i$ with respect to camera $i$:

\begin{equation}
\label{eq:pose}
\bP^{(i)} \;=\; \bR^{(i)} \bP + \bT^{(i)}
\qquad \text{(from the estimated $\bR$ and $\bT$, for any point in the scene)},
\end{equation}
and equivalently, through the anchors,
\begin{equation}
\label{eq:combi}
\bP^{(i)} \;=\; \sum_{k=1}^{K} w_k \, \bP^{(i)}_k ,
\end{equation}
where $\bP^{(i)}_k$ are the estimated 3D anchor points in image $i$ (with respect to camera $i$).
\end{observation}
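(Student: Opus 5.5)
We need to justify Observation~\ref{obs:pose}: in every frame $i$, the target's position is given by \eqref{eq:combi} and its pose satisfies \eqref{eq:pose}, with the rotation recoverable by the same linear mechanism as positions.

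\textbf{Step 1 (position).} Assume the weights $\hat{\bw}$ satisfy \eqref{eq:comb} exactly in a reference frame, with $\sum_k \hat w_k = 1$; this holds in the noise-free limit of \eqref{eq:lsq}. Frame $i$ differs from the reference by a rigid motion $(\bR^{(i)},\bT^{(i)})$ acting on all world-static points. Property~\ref{prop:lincomb} then gives \eqref{eq:combi} at once: the barycentric weights transport unchanged.

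\textbf{Step 2 (axes).} Represent the target's local frame by three additional scene points $\bP + \mathbf{e}_j$, $j=1,2,3$, where the $\mathbf{e}_j$ are the axes in the reference frame. Each of these is world-static, so Property~\ref{prop:lincomb} gives weights $\bw^{(j)}$ summing to one with $\bP^{(i)} + \bR^{(i)}\mathbf{e}_j = \sum_k w^{(j)}_k \bP^{(i)}_k$. Subtracting \eqref{eq:combi} gives
\[
\bR^{(i)}\mathbf{e}_j=\sum_k (w^{(j)}_k-\hat w_k)\,\bP^{(i)}_k ,
\]
whose coefficients sum to zero. This is the vector case of the same computation \eqref{eq:star}: the translation term cancels because the coefficients sum to $0$ rather than $1$. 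The columns of $\bR^{(i)}$ are therefore fixed linear combinations of the current anchors. Writing $\bT^{(i)} = \bP^{(i)} - \bR^{(i)}\bP$ then yields \eqref{eq:pose} for every scene point, again by linearity.

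\textbf{Step 3 (Kabsch--Umeyama alternative).} If the anchors are not collinear, which needs $K\ge 3$ in general position, the rigid map between the two anchor sets is unique. The Kabsch--Umeyama SVD minimizer attains zero residual in the exact case, so it returns exactly $(\bR^{(i)},\bT^{(i)})$. Applying that map to $\bP$ then reproduces \eqref{eq:pose}, and hence agrees with \eqref{eq:combi}.

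\textbf{Main obstacle.} With noisy lifted anchors, neither route is exact. The linearly predicted axes need not be orthonormal, and the two routes need not agree. I would handle this by projecting the predicted axis matrix onto $SO(3)$ via its polar/SVD factor. The statement would then be claimed exactly in the noise-free case and as a least-squares estimate otherwise, with redundancy ($K\gg4$) motivating accuracy as in Observation~\ref{obs:robust}. A quantitative error bound would go beyond what the observation asserts.
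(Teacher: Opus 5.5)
Your proposal is correct and follows essentially the paper's own justification. Positions transport by Property~\ref{prop:lincomb}, and your zero-sum coefficients for $\bR^{(i)}\mathbf{e}_j$ are exactly the paper's remark that the axes, being differences of points, are expressed ``in the exact same linear fashion,'' with Kabsch--Umeyama as the stated alternative; your uniqueness argument for non-collinear anchors and the $SO(3)$ projection under noise are sound refinements that the paper leaves implicit.
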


\begin{observation}[Robustness to occlusion]
The main advantage of tracking a point's location through $K$ anchors, instead of tracking the point directly with a standard tracker, is that we can (very often) rely on stable and visible anchors even when the point in question is occluded, or when it sits in an area that is hard to track---a region with zero image gradients or constant color.
\end{observation}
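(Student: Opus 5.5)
The statement is an informal claim: a point that stays visible and trackable is easier to follow through $K$ anchors than directly, because the anchors can still be used when the point itself is occluded or untrackable. I would make it precise by comparing what each method needs in frame $i$. Tracking $\bP$ directly needs $\bP^{(i)}$ to be observed in frame $i$, i.e.\ visible and locatable by the 2D tracker. The anchor-based estimate \eqref{eq:combi} needs only the fixed weights $\hat{\bw}$ from \eqref{eq:lsq} and the lifted anchors $\bP^{(i)}_k$ from \eqref{eq:lift}. The weights are estimated once, from frames in which the target was visible. By Property~\ref{prop:lincomb} they remain valid in every later frame, whatever the camera pose. The key consequence is that in frame $i$ the formula \eqref{eq:combi} contains no measurement of $\bP$ at all.

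The argument then proceeds in three steps.

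\begin{itemize}
\item \textbf{Step 1.} Fix a reference frame in which $\bP$ is visible and compute $\hat{\bw}$ with $\sum_k \hat w_k = 1$.
\item \textbf{Step 2.} In a later frame $i$ where $\bP$ is occluded or lies in a textureless region, apply Property~\ref{prop:lincomb} with $(\bR^{(i)},\bT^{(i)})$ the rigid motion between the two camera frames. This gives $\sum_k \hat w_k \bP^{(i)}_k = \bR^{(i)}\bP + \bT^{(i)} = \bP^{(i)}$, exactly for noiseless anchors. The computation is the same as in \eqref{eq:star}.
\item \textbf{Step 3.} Note that only the anchors need to be tracked in frame $i$.
\end{itemize}

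Anchors that are themselves lost in frame $i$ can be dropped. The remaining weights are re-estimated on the surviving subset, again under the sum-to-one constraint, from the stored earlier frames. By Observation~\ref{obs:ls}, this works as long as at least four anchors in general position remain. The same reasoning covers the rotation, either through Observation~\ref{obs:pose} or through SVD alignment on the visible anchors.

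The main obstacle is the word ``often''. The claim is not a theorem, and it rests on a modeling assumption: the anchors are world-static and sufficiently many of them stay visible and non-degenerate, meaning not coplanar, while the target is hidden. I would state this as the hypothesis, with anchors chosen via segmentation on static structure and spread spatially so that a single occluder is unlikely to hide both the target and all anchors. Under that hypothesis, the exactness in the noiseless case follows directly from Property~\ref{prop:lincomb}. With noisy depth, robustness is argued informally via Observation~\ref{obs:robust}: the estimate is a fixed linear functional of many independent noisy anchors, so its error stays bounded even when no direct observation of $\bP$ exists.
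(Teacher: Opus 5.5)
Your argument is correct and follows the same reasoning the paper relies on (the paper gives no separate proof of this observation): the estimate \eqref{eq:combi} uses only the anchors and the fixed weights from \eqref{eq:lsq}, Property~\ref{prop:lincomb} keeps those weights valid in every frame, lost anchors are handled by re-estimating on the survivors as in steps S3--S4 of Algorithm~\ref{alg:main}, and the ``very often'' rests on the anchor-selection assumptions of Remark~\ref{rem:anchors}. One small fix: your opening sentence says the target ``stays visible and trackable,'' whereas the claim is that the \emph{anchors} stay visible and trackable while the target need not.
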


\begin{observation}[Semantic anchor selection]
We use semantic segmentation to choose the anchor points wisely; see Remark~\ref{rem:anchors}.
\end{observation}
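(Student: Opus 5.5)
The statement is a design prescription rather than a mathematical claim, so the plan is to \emph{justify} it by reducing it to the hypotheses of Property~\ref{prop:lincomb} and to the least-squares estimate \eqref{eq:lsq}. The argument has three steps: a necessity step, a conditioning step, and a verification step. The conditioning step is not a proof.

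\textbf{Step 1: why static anchors are necessary.} Property~\ref{prop:lincomb} assumes that both the target $\bP$ and every anchor $\bP_k$ are fixed with respect to the world. Under that assumption there is a single rigid map $(\bR^{(i)},\bT^{(i)})$ per frame. The computation in \eqref{eq:star} then shows that the weights $\bw$ are frame-independent.

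Now suppose instead that some anchor $k$ moves with its own rigid motion $(\bR_k^{(i)},\bT_k^{(i)})$. Then
\[
\sum_j w_j \bP^{(i)}_j - \bP^{(i)} = w_k\bigl(\bR_k^{(i)}-\bR^{(i)}\bigr)\bP_k + w_k\bigl(\bT_k^{(i)}-\bT^{(i)}\bigr).
\]
This residual is generically nonzero and changes from frame to frame. So no fixed $\bw$ satisfies \eqref{eq:comb} across all frames, and the objective in \eqref{eq:lsq} is biased by a systematic, non-random error.

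It follows that restricting anchors to pixels labelled as world-static classes (buildings, ground, trees, and similar) by the segmentation network~\cite{kirillov2023sam} is exactly what enforces the hypothesis of Property~\ref{prop:lincomb}. The residual in \eqref{eq:lsq} is then left with lifting noise only.

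\textbf{Step 2: why nearby anchors help.} Here I would model the lifted anchor as $\bP^{(i)}_k = \bar\bP^{(i)}_k + \epsilon^{(i)}_k$, where $\bar\bP^{(i)}_k$ is the true position and $\epsilon^{(i)}_k$ is the noise. I would also assume that monocular depth errors are spatially correlated; for example, a locally smooth scale bias plus independent noise.

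Under this model, anchors near the target share much of the target's depth error. The estimate $\sum_k w_k \bP^{(i)}_k$ with $\sum_k w_k = 1$ then partially cancels the common bias relative to the target.

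Well-spread anchors matter for a second reason. They keep the $3N\times K$ design matrix of \eqref{eq:lsq} well conditioned, meaning they are not nearly coplanar. Good conditioning bounds how much the noise is amplified in $\hat\bw$, by the standard perturbation estimate for constrained least squares.

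\textbf{Step 3: verification.} Combining the two steps shows that semantic selection removes the systematic error term of Step 1 and reduces the variance term of Step 2. This is the sense of ``wisely'' deferred to Remark~\ref{rem:anchors}.

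The main obstacle is Step 2. The claim that nearby anchors improve accuracy depends on an error model for monocular depth, which is only empirically motivated. I would therefore present it as a conditional statement under the stated correlation assumption, and not as a theorem.

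Step 1, by contrast, is rigorous. It is a direct consequence of the proof of Property~\ref{prop:lincomb}.
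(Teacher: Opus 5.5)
Your two reasons (world-static anchors so that the hypothesis of Property~\ref{prop:lincomb} holds, and nearby anchors so that their depth errors resemble the target's) are exactly the content of Remark~\ref{rem:anchors}, which is all the paper offers for this observation. Your residual formula in Step~1 is correct. The cancellation in Step~2 can be stated more sharply: a distortion shared by target and anchors that is linear or locally affine preserves sum-to-one combinations. A common depth scale is an example, since it rescales every lifted point about the camera centre. The true weights therefore still fit exactly, and the shared part of the error never enters the residual of \eqref{eq:lsq}.

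Two of your supporting claims are false, however. First, ``no fixed $\bw$ satisfies \eqref{eq:comb} across all frames'' is too strong. If the remaining static anchors affinely span 3D, fixed weights still exist, namely those with $w_k=0$, and noiseless least squares over frames in which the anchor actually moves returns such weights. The correct conclusion is that a moving anchor must receive zero weight. The failure that segmentation guards against, and that least squares cannot detect, is an anchor that is still during the $N$ frames of step S1 and moves afterwards; its estimated weight is then generically nonzero. Also, segmentation is only a proxy for staticity (the paper says ``expected to be fixed''), so it does not enforce the hypothesis.

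Second, the conditioning argument in Step~2 fails for $K>4$. Without noise, column $k$ of the stacked $3N\times K$ matrix equals $\mathbf{M}\,[\bP_k^\top,1]^\top$ for one fixed $3N\times 4$ matrix $\mathbf{M}$ built from the $(\bR^{(i)},\bT^{(i)})$. Equivalently, given $\sum_k w_k=1$, every frame reduces to the same three equations $\sum_k w_k\bP_k=\bP$. So this matrix, with the constraint row appended, has rank at most $4$, however many frames you use and however the anchors are spread. The count in \eqref{eq:counting} does not apply to rigid images of one configuration. Consequently $\hat\bw$ is not identifiable, the remaining singular values sit at the noise level, and no perturbation bound on $\hat\bw$ follows from anchor placement.

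Spread does matter, but differently. It governs how small the weights solving $\sum_k w_k\bP_k=\bP$, $\sum_k w_k=1$ can be (nearly coplanar anchors force large weights for an off-plane target). That in turn governs how much test-time depth noise the prediction $\sum_k w_k\bP^{(i)}_k$ amplifies. This benefit only materializes if you explicitly select a minimum-norm or ridge-regularized solution, which \eqref{eq:lsq} on noisy data does not do. Spread also pulls against the nearness you are trying to justify. With these corrections, your Step~3 holds in the conditional form you already propose.
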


%\subsection{Algorithm}

\begin{algorithm}[ht]
\caption{Anchor 3D: Pose Tracking from Noisy Depth with Semantic Segmentation}
\label{alg:main}
\begin{algorithmic}[1]
\State \textbf{S1:} Estimate $w_1, w_2, \dots, w_K$ with least squares \eqref{eq:lsq} from the first $N$ frames, for a given point $\bP$ and anchors $\bP_1, \dots, \bP_K$, by using a standard tracker and per-frame depth estimation \eqref{eq:lift} to form the overconstrained system of equations.
\State \textbf{S2:} Use the estimated weights and the tracked anchors to find $\bP$ in subsequent frames via \eqref{eq:combi}, and the rotation via the same fast linear estimation (our approach) or the classic
SVD alignment (Observation~\ref{obs:pose}).
\State \textbf{S3:} If not all anchors are found in the current image $i$, recompute the weights for the \emph{available} anchors using the initial $N$ frames, as long as $K_{\mathrm{available}} \ge K_{\min}$.
\State \textbf{S4:} If $K_{\mathrm{available}} < K_{\min}$, restart the estimation of the weights by going back to \textbf{S1} (re-track all points with a standard tracker).
\end{algorithmic}
\end{algorithm}

\section{Discussion}

\begin{remark}[Choosing the anchors]
\label{rem:anchors}
Anchors are chosen in regions that are expected to be fixed with respect to the world---buildings, trees, man-made structures---by using semantic segmentation, and \underline{near} the point to be tracked $\bP$. Closeness to $\bP$ is important in order to reduce numerical estimation errors: depth estimation around $\bP$ is expected to be \ul{more consistent with the depth of $\bP$} itself.
\end{remark}

\begin{remark}[When to use 3D pose tracking vs.\ standard tracking]
\label{rem:ensemble}
The two tracking approaches can be combined (an average ensemble). When standard tracking fails---signaled by a large disagreement between the two---the 3D anchor-based tracking is the more reliable of the two.
\end{remark}

\begin{remark}[Rotation requires the anchors]
\label{rem:avatar}
For correctly rotating an avatar with respect to the world, the anchor points are always needed: standard tracking of $\bP$ provides no rotation information, only position (translation).
\end{remark}

\section{Anchor 3D algorithm as geometric auto-regression}
\label{sec:lgae}

We can use the idea of representing the 3D location of a point as a linear combination of $K$ anchor points in order to represent each \emph{anchor} as a linear combination of the other $K-1$ anchors---so that the anchors can auto-correct themselves, as a group, through auto-regression. This will lead to a surprising result.

\subsection{The auto-regression matrix}

First, let us write, for an anchor $\bP_i$,
\begin{equation}
\label{eq:autoreg}
\bP_i \;=\; w_{i1}\bP_1 + w_{i2}\bP_2 + \dots + 0\cdot\bP_i + \dots + w_{iK}\bP_K ,
\end{equation}
i.e.\ the same equation as \eqref{eq:comb}, written for each of the $K$ anchor points in turn, with the convention $w_{ii} = 0$: anchor $\bP_i$ never participates in predicting itself. The weights $w_{ij}$ for writing any anchor as a linear combination of the others can be estimated exactly as before (Observation~\ref{obs:ls}). Collecting the weight $w_{iq}$ of anchor $q$ in the equation of anchor $i$, we form the matrix
\begin{equation}
\label{eq:Wmatrix}
\mathbf{W} \;=\;
\begin{bmatrix}
0 & w_{12} & w_{13} & \cdots & w_{1K} \\
w_{21} & 0 & w_{23} & \cdots & w_{2K} \\
\vdots & & \ddots & & \vdots \\
w_{K1} & w_{K2} & \cdots & w_{K,K-1} & 0
\end{bmatrix},
\qquad w_{ii} = 0, \quad \textstyle\sum_{q} w_{iq} = 1 .
\end{equation}
Thus $\mathbf{W}$ is a matrix of auto-regressing weights where we always have \underline{$w_{ii} = 0$}, because anchor point $\bP_i$ never participates in predicting itself---we have a \ul{masking of $\bP_i$ for producing $\bP_i$}. Now all $K$ equations can be written at once in matrix form. Stacking the anchor coordinates as the rows of $\mathbf{P} \in \R^{K\times 3}$:
\begin{equation}
\label{eq:stacked}
\underbrace{\begin{bmatrix}
P_{1x} & P_{1y} & P_{1z} \\
P_{2x} & P_{2y} & P_{2z} \\
\vdots & \vdots & \vdots \\
P_{ix} & P_{iy} & P_{iz} \\
\vdots & \vdots & \vdots \\
P_{Kx} & P_{Ky} & P_{Kz}
\end{bmatrix}}_{\textstyle \mathbf{P}}
\;=\;
\underbrace{\begin{bmatrix}
0 & w_{12} & w_{13} & \cdots & w_{1K} \\
w_{21} & 0 & w_{23} & \cdots & w_{2K} \\
\vdots & & \ddots & & \vdots \\
w_{i1} & \cdots & w_{ii}\!=\!0 & \cdots & w_{iK} \\
\vdots & & & \ddots & \vdots \\
w_{K1} & w_{K2} & \cdots & w_{K,K-1} & 0
\end{bmatrix}}_{\textstyle \mathbf{W}}
\cdot
\underbrace{\begin{bmatrix}
P_{1x} & P_{1y} & P_{1z} \\
P_{2x} & P_{2y} & P_{2z} \\
\vdots & \vdots & \vdots \\
P_{ix} & P_{iy} & P_{iz} \\
\vdots & \vdots & \vdots \\
P_{Kx} & P_{Ky} & P_{Kz}
\end{bmatrix}}_{\textstyle \mathbf{P}} ,
\end{equation}
in short, the matrix fixed-point equation
\begin{equation}
\label{eq:fixedpoint}
\boxed{\;\mathbf{P} \;=\; \mathbf{W}\,\mathbf{P}\;}
\end{equation}

\begin{observation}[A linear geometric auto-encoder]
The diagonal of zeros of $\mathbf{W}$ means that each $\bP_i = [P_{ix},P_{iy},P_{iz}]$ is actually masked when predicting its location $\bP_i$ from the others. So the equation is, in essence, a \ul{linear geometric auto-encoder} (LGAE): the scene encodes---and reproduces---itself, each point being explained by all the others.
\end{observation}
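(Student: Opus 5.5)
The plan is to make precise what the observation asserts and then check each part: (a) the stacked system \eqref{eq:stacked} is exactly the $K$ scalar-weighted relations \eqref{eq:autoreg}; (b) the masking $w_{ii}=0$ means row $i$ of $\mathbf{W}\mathbf{P}$ depends only on the other anchors; (c) the fixed-point relation is an intrinsic property of the point set, the same in every frame. Part (c) is what justifies calling $\mathbf{W}$ an ``encoder'' of the scene rather than of one particular view.

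For (a) I would compute the $i$-th row of $\mathbf{W}\mathbf{P}$ directly. It equals $\sum_q w_{iq}[P_{qx},P_{qy},P_{qz}]$. By \eqref{eq:autoreg}, this equals $[P_{ix},P_{iy},P_{iz}]$, the $i$-th row of $\mathbf{P}$. Stacking over $i$ gives \eqref{eq:fixedpoint}.

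For (b), the term $q=i$ in that sum carries the factor $w_{ii}=0$. So the prediction of $\bP_i$ is a function of $\{\bP_q\}_{q\ne i}$ only, which is the masked-reconstruction structure. I would also note that the row-sum constraint $\sum_q w_{iq}=1$ says $\mathbf{W}\mathbf{1}=\mathbf{1}$. Hence the augmented matrix $[\mathbf{P}\;\mathbf{1}]\in\R^{K\times 4}$, whose rows are the anchors in homogeneous coordinates \eqref{eq:homog}, also satisfies $[\mathbf{P}\;\mathbf{1}]=\mathbf{W}[\mathbf{P}\;\mathbf{1}]$. Its four columns are therefore eigenvectors of $\mathbf{W}$ with eigenvalue $1$. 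This anticipates the spectral claims made later in the paper.

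For (c) I would invoke Property~\ref{prop:lincomb} row by row. Each anchor $\bP_i$ is a fixed affine combination of the remaining $K-1\ge 4$ anchors, so the same weights $w_{iq}$ remain valid after any rigid map $(\bR,\bT)$. In matrix form this reads
\begin{equation*}
\mathbf{W}\bigl(\mathbf{P}\bR^{\top}+\mathbf{1}\bT^{\top}\bigr)=\mathbf{P}\bR^{\top}+\mathbf{1}\bT^{\top},
\end{equation*}
which follows from $\mathbf{W}\mathbf{P}=\mathbf{P}$ and $\mathbf{W}\mathbf{1}=\mathbf{1}$. In fact it holds for any affine map, not only rigid ones. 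Hence the same $\mathbf{W}$ satisfies \eqref{eq:fixedpoint} for the anchor matrix $\mathbf{P}^{(i)}$ of every frame $i$, and $\mathbf{W}$ encodes the configuration independently of camera pose.

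The main obstacle is existence and well-posedness of the weights, not the algebra. Property~\ref{prop:lincomb} presupposes that each $\bP_i$ \emph{can} be written as an affine combination of the others. This requires the remaining $K-1$ anchors to affinely span $\R^3$, i.e.\ not all coplanar, which holds generically when $K\ge 5$. When $K-1>4$ the weights are not unique, and the least-squares/minimum-norm choice from Observation~\ref{obs:robust} would have to be fixed to make $\mathbf{W}$ well defined. I would state this general-position assumption explicitly. I would also treat the noisy case only as approximate equality, with $\hat{\mathbf{W}}$ estimated by \eqref{eq:lsq} applied to each anchor in turn.
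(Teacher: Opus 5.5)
Your proposal is correct and follows the paper, which gives no argument beyond the construction itself: stacking \eqref{eq:autoreg} over all anchors yields \eqref{eq:fixedpoint}, and $w_{ii}=0$ means row $i$ of $\mathbf{W}\mathbf{P}$ involves only the other anchors, which is exactly your parts (a) and (b). Your part (c) and the well-posedness caveats go beyond what this observation claims; they correspond to Property~\ref{prop:lincomb} and the paper's later coordinate-frame invariance remark, and your matrix argument via $\mathbf{W}\mathbf{P}=\mathbf{P}$ and $\mathbf{W}\mathbf{1}=\mathbf{1}$ is a cleaner form of that remark that also covers arbitrary affine maps.
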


\begin{observation}[The geometry lives in the $\lambda = 1$ eigenspace]
\label{obs:eigen}
Reading \eqref{eq:fixedpoint} column by column,
\begin{equation}
\label{eq:eigencols}
\mathbf{p}_x = \mathbf{W}\mathbf{p}_x, \qquad
\mathbf{p}_y = \mathbf{W}\mathbf{p}_y, \qquad
\mathbf{p}_z = \mathbf{W}\mathbf{p}_z ,
\end{equation}
where the columns of $\mathbf{P}$ are
\begin{equation}
\label{eq:columns}
\mathbf{p}_x = \begin{bmatrix} P_{1x} \\ P_{2x} \\ \vdots \\ P_{ix} \\ \vdots \\ P_{Kx} \end{bmatrix},
\qquad
\mathbf{p}_y = \begin{bmatrix} P_{1y} \\ P_{2y} \\ \vdots \\ P_{iy} \\ \vdots \\ P_{Ky} \end{bmatrix},
\qquad
\mathbf{p}_z = \begin{bmatrix} P_{1z} \\ P_{2z} \\ \vdots \\ P_{iz} \\ \vdots \\ P_{Kz} \end{bmatrix}.
\end{equation}
In other words, the first three eigenvectors of $\mathbf{W}$ have $\lambda = 1$ and encode the 3D positions of the anchors ($\lambda = 1$ with multiplicity at least $3$). Moreover, since each row sums to one, the constant vector $\mathbf{1}$ is a fourth $\lambda = 1$ eigenvector: $\mathbf{W}\mathbf{1} = \mathbf{1}$ (this is the homogeneous coordinate). Any linear combination $\alpha\,\mathbf{p}_x + \beta\,\mathbf{p}_y + \gamma\,\mathbf{p}_z + \delta\,\mathbf{1}$ is again an eigenvector with $\lambda = 1$, so the eigenspace contains the full affine span of the coordinate functions.
\end{observation}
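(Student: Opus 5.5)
The plan is to read the boxed fixed-point equation $\mathbf{P}=\mathbf{W}\mathbf{P}$ one column at a time, then add the row-sum constraint, and finally use linearity of the eigenspace. \textbf{First}, I would check that $\mathbf{P}=\mathbf{W}\mathbf{P}$ holds exactly. Row $i$ of the product $\mathbf{W}\mathbf{P}$ is $\sum_q w_{iq}\bP_q$. Because $w_{ii}=0$, this is precisely the right-hand side of \eqref{eq:autoreg}, so it equals $\bP_i$, the $i$-th row of $\mathbf{P}$. I would assume the weights are exact here, i.e.\ noise-free, as produced by Observation~\ref{obs:ls} when the system is consistent. Property~\ref{prop:lincomb} guarantees these weights do not depend on the frame in which $\mathbf{P}$ is written, so the same $\mathbf{W}$ works in every coordinate system.

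\textbf{Second}, I would take the columns. Right-multiplying both sides by the standard basis vectors $e_1,e_2,e_3\in\R^3$ gives \eqref{eq:eigencols} directly. So each of $\mathbf{p}_x,\mathbf{p}_y,\mathbf{p}_z$ is a fixed vector of $\mathbf{W}$; when nonzero, it is an eigenvector with $\lambda=1$. \textbf{Third}, I would handle the homogeneous coordinate. The constraint $\sum_q w_{iq}=1$ says exactly that $(\mathbf{W}\mathbf{1})_i=1$ for every $i$, so $\mathbf{W}\mathbf{1}=\mathbf{1}$. Equivalently, this is the fourth row of the homogeneous version of \eqref{eq:autoreg}, just as in the proof of Property~\ref{prop:lincomb}. \textbf{Fourth}, the kernel of $\mathbf{W}-\mathbf{I}$ is a linear subspace. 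Hence it contains $\alpha\mathbf{p}_x+\beta\mathbf{p}_y+\gamma\mathbf{p}_z+\delta\mathbf{1}$ for all scalars, which is the claimed affine span.

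The main obstacle is the multiplicity claim: ``at least $3$'', together with $\mathbf{1}$ giving a fourth. This requires $\mathbf{p}_x,\mathbf{p}_y,\mathbf{p}_z,\mathbf{1}$ to be linearly independent. That holds iff the $K\times 4$ homogeneous matrix $[\mathbf{P}\ \mathbf{1}]$ has rank $4$, i.e.\ iff the anchors are not coplanar. I would state this as the genericity assumption and prove it by contraposition. A vanishing combination $\alpha\mathbf{p}_x+\beta\mathbf{p}_y+\gamma\mathbf{p}_z+\delta\mathbf{1}=0$ with $(\alpha,\beta,\gamma)\neq 0$ means every anchor satisfies $\alpha x+\beta y+\gamma z+\delta=0$, so all anchors lie on a plane. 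If instead $\alpha=\beta=\gamma=0$, then $\delta=0$ is forced, so the combination is trivial.

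Under this assumption, $\dim\ker(\mathbf{W}-\mathbf{I})\ge 4$, so the geometric, and hence algebraic, multiplicity of $\lambda=1$ is at least $4$. I would also remark on what the argument does not show. It gives no upper bound on the multiplicity, and it does not say $\lambda=1$ is the largest eigenvalue. This is why the reconstruction from the eigenspace holds only up to the affine ambiguity mentioned in the abstract.
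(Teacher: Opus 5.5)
Your proposal is correct and follows the paper's own argument. You read $\mathbf{P}=\mathbf{W}\mathbf{P}$ column by column, use the unit row sums to get $\mathbf{W}\mathbf{1}=\mathbf{1}$, and close under linear combinations because $\ker(\mathbf{W}-\mathbf{I})$ is a subspace; your explicit non-coplanarity assumption (rank of $[\mathbf{P}\ \mathbf{1}]$ equal to $4$), which is what actually justifies the multiplicity count, is a useful precision that the paper leaves implicit.
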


\begin{observation}[Spectral coordinate-frame invariance]
Observation~\ref{obs:eigen} confirms, from the spectral side, that the actual world coordinate system in which $\mathbf{p}_x, \mathbf{p}_y, \mathbf{p}_z$ are written is not important: rotating and translating the points arbitrarily replaces each coordinate vector by a linear combination of $\{\mathbf{p}_x, \mathbf{p}_y, \mathbf{p}_z, \mathbf{1}\}$, which stays inside the $\lambda = 1$ eigenspace---and $\mathbf{W}$ itself remains \emph{unchanged} (Property~\ref{prop:lincomb}). The matrix $\mathbf{W}$ is thus a coordinate-free, view-invariant encoding of the scene's shape.
\end{observation}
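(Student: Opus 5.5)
We prove the two assertions of the statement separately: (a) under any rigid change of coordinates, the new coordinate vectors stay inside the $\lambda=1$ eigenspace of $\mathbf{W}$; and (b) $\mathbf{W}$ itself is unchanged.

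\emph{Step 1: the new coordinates lie in the affine span.} Write the rigid motion as $\bP'_k = \bR\bP_k + \bT$ for every anchor $k$. In stacked form this is
\[
\mathbf{P}' = \mathbf{P}\bR^{\top} + \mathbf{1}\bT^{\top} .
\]
Each column of $\mathbf{P}'$ therefore has the form $\alpha\mathbf{p}_x+\beta\mathbf{p}_y+\gamma\mathbf{p}_z+\delta\mathbf{1}$. Here $(\alpha,\beta,\gamma)$ is a row of $\bR$ and $\delta$ is the corresponding entry of $\bT$.

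\emph{Step 2: that span is fixed by $\mathbf{W}$.} By Observation~\ref{obs:eigen}, $\mathbf{W}\mathbf{p}_x=\mathbf{p}_x$, and similarly for $\mathbf{p}_y$ and $\mathbf{p}_z$. Because every row of $\mathbf{W}$ sums to one, $\mathbf{W}\mathbf{1}=\mathbf{1}$. Multiplying the stacked form on the left by $\mathbf{W}$ gives
\[
\mathbf{W}\mathbf{P}' = (\mathbf{W}\mathbf{P})\bR^{\top} + (\mathbf{W}\mathbf{1})\bT^{\top} = \mathbf{P}\bR^{\top}+\mathbf{1}\bT^{\top} = \mathbf{P}' .
\]
So the transformed coordinate vectors remain in the $\lambda=1$ eigenspace, which proves (a).

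\emph{Step 3: $\mathbf{W}$ is unchanged.} Fix a row $i$. Its weights satisfy $w_{ii}=0$, $\sum_q w_{iq}=1$, and $\bP_i=\sum_{q\neq i} w_{iq}\bP_q$. Property~\ref{prop:lincomb}, applied with $\bP_i$ as the target and the other anchors as the anchor set, shows that the same weights reproduce $\bP'_i$ from the $\bP'_q$. Hence $\mathbf{W}$ satisfies $\mathbf{P}'=\mathbf{W}\mathbf{P}'$ with the same masking and sum-to-one constraints.

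This shows that the old $\mathbf{W}$ is \emph{a valid} auto-regression matrix in the new frame. It does not yet show that it is \emph{the} estimated one. For that we turn to the estimation.

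\emph{Step 4: the estimated $\mathbf{W}$ is unchanged.} Each row of $\mathbf{W}$ is obtained from the constrained least-squares problem \eqref{eq:lsq}. By the sum-to-one constraint, the residual of row $i$ in frame $j$ is
\[
\bP^{(j)}_i-\sum_q w_{iq}\bP^{(j)}_q=\sum_q w_{iq}\bigl(\bP^{(j)}_i-\bP^{(j)}_q\bigr).
\]
Apply a fixed rigid motion $(\bR,\bT)$ to every frame. The translation cancels in each difference $\bP^{(j)}_i-\bP^{(j)}_q$, so the residual is simply multiplied by $\bR$. Since rotation preserves norms, the objective is identical as a function of the weights, and the minimizer is the same. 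This covers the case where one rigid motion is applied to the data of all frames.

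The claim that $\mathbf{W}$ is independent of each frame's own camera pose is handled differently. Exact data already agree across frames by Property~\ref{prop:lincomb}, so no separate motion per frame needs to be invoked.

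\emph{Main obstacle.} The delicate point is uniqueness. If the weights are not identifiable, for instance when $K$ is large relative to $3N+1$ in \eqref{eq:counting}, then ``$\mathbf{W}$ unchanged'' must be read as referring to the solution set rather than to a single matrix. In that case we would fix the minimum-norm solution. The rigid invariance of the objective shown in Step 4 means the feasible set does not depend on the coordinates, so this minimum-norm choice is also invariant.
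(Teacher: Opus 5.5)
Your proposal is correct, and Steps 1--3 are precisely the paper's inline justification, made explicit in matrix form: the columns of $\mathbf{P}'$ lie in the span of $\{\mathbf{p}_x,\mathbf{p}_y,\mathbf{p}_z,\mathbf{1}\}$, which $\mathbf{W}$ fixes, and Property~\ref{prop:lincomb} applied row by row keeps $\mathbf{W}$ valid. Step~4 is a genuine addition, since the paper only argues that the old weights remain valid and never that the \emph{least-squares estimate} is invariant; your difference rewriting is invariant term by term, so it already covers an independent rigid motion in each frame, even for noisy data, and in the non-unique case it is invariance of the set of minimizers, not of the feasible set, that makes the minimum-norm choice invariant (non-uniqueness is in fact generic for exact data once $K \ge 6$, whatever $N$ is, because each exact frame contributes the same three equations up to rotation).
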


\section{Camera-independent spectral 3D reconstruction from geometric auto-regression}
\label{sec:lgae-apps}

We make the following reasoning steps, based on known mathematical results~\cite{hormann2017gbc,horn2013matrix}, to show that an effective and very fast power iteration algorithm can recover the 3D shape of the points within a common coordinate system.

\begin{enumerate}

\item Any point in the interior of the convex hull of a set of points can be expressed as a convex combination of those points, i.e.\ an affine combination with non-negative weights whose sum is equal to one (generalized barycentric coordinates~\cite{hormann2017gbc}).

\item A subset of our points lies exactly on the boundary of the convex hull, including the points that are its vertices. Therefore, for the majority of the points, the corresponding weights $\bw$ can be chosen non-negative, the potentially problematic cases being the points located at the vertices of the convex hull.

\item Thus, the only remaining points for which negative weights may potentially occur are the vertices of the convex hull. For these points, instead of estimating the corresponding weights, we keep their positions exactly as given by the monocular depth estimate. This is achieved by setting the corresponding diagonal entry of the matrix $\mathbf{W}$ to one, with all other entries on that row set to zero. Consequently, multiplication by $\mathbf{W}$ leaves these points unchanged, through the identity mapping. In this way, we guarantee that all entries of the auto-regression matrix $\mathbf{W}$ are non-negative.

\item It follows that $\mathbf{W}$ is a \textbf{row-stochastic (right-stochastic) matrix}, since all its entries are non-negative and the entries of every row sum to one.

\item Consequently, by the Perron--Frobenius theorem~\cite{horn2013matrix}, the spectral radius of $\mathbf{W}$ is equal to one; equivalently, the maximum absolute value of its eigenvalues is
\begin{equation*}
\rho(\mathbf{W})=\max_i |\lambda_i|=1.
\end{equation*}

\item Given that the eigenvalues of maximum modulus are equal to one, and assuming that the eigenvalue $\lambda=1$ has multiplicity four, the corresponding principal eigenspace can be identified with the four-dimensional space spanned by the homogeneous 3D coordinates of the points, expressed with respect to an arbitrary coordinate system and up to the inherent geometric ambiguities.

\item Therefore, power iteration applied to $\mathbf{W}$, starting from a random vector, converges to a vector in the dominant eigenspace associated with $\lambda=1$. By repeatedly extracting independent directions from this four-dimensional eigenspace, we can recover a basis spanning the four homogeneous coordinate dimensions of the 3D point configuration.

More specifically, after recovering an eigenvector $\mathbf{v}$, its contribution can be removed through an appropriate deflation or orthogonalization procedure~\cite{golub2013matrix}, after which power iteration can be applied again to recover another independent direction. Repeating this procedure four times yields a basis for the four-dimensional dominant eigenspace and, consequently, a homogeneous representation of the 3D point configuration, up to the corresponding global geometric ambiguity.

In essence, this procedure provides a robust reconstruction of the 3D point configuration by exploiting the auto-regressive relationships encoded by $\mathbf{W}$, potentially yielding a more stable estimate than the 3D geometry obtained directly from monocular depth estimation applied independently to a single frame.
\end{enumerate}

\subsection{Spectral 3D reconstruction using multiple monocular depth maps}
\label{sec:Spec_3D_1}

The reasoning steps above suggest the following reconstruction procedure. First, construct a non-negative auto-regression matrix $\mathbf{W}$ whose rows sum to one, using convex reconstruction weights whenever possible and identity rows for points that cannot be reconstructed from the remaining points. Then compute the four-dimensional invariant subspace associated with the eigenvalue $1$, or equivalently the four right singular vectors of $\mathbf{I}-\mathbf{W}$ with the smallest singular values.

Since the constant vector ($\mathbf{1}$) is known a priori, only three additional independent directions need to be estimated. These directions provide a 3D embedding that is consistent with all auto-regressive constraints encoded by $\mathbf{W}$.

In contrast to monocular depth estimation performed independently for a single frame, the resulting reconstruction is constrained globally by the geometric relationships among all points. Consequently, errors affecting individual depth estimates cannot independently perturb the reconstructed geometry; instead, the final solution is restricted to the low-dimensional invariant subspace jointly supported by the entire point configuration.

\subsection{3D auto-correction by power iteration}
As discussed, the fixed-point equation $\mathbf{P} = \mathbf{W}\mathbf{P}$ can be used to auto-correct a noisy set of 3D points by power iteration:
\begin{equation}
\label{eq:power}
\mathbf{P}^{(t+1)} \;\leftarrow\; \mathbf{W}\,\mathbf{P}^{(t)} .
\end{equation}
If the anchor points are the vertices of a mesh, the method can be used to \ul{improve depth maps and meshes in 3D}: each vertex is repeatedly re-explained by all the others, and errors inconsistent with the group are averaged away.

\subsection{Spectral 3D reconstruction without monocular depth maps}
\label{sec:Spec_3D_2}

Since the linear combination holds in 4D homogeneous coordinates, it also holds for any \emph{subset} of the coordinates---say $(X, Y)$. This suggests the following idea: take the 2D projections of the $K$ points from many (random) camera poses onto the image plane, assuming the projection is approximately orthographic (we look at the points from some distance). From many such projections, we estimate the auto-regressing weight matrix $\mathbf{W}$ with least squares, using the same algorithm as before---so $\mathbf{W}$ is estimated \emph{from 2D projections only}, without ever needing estimated 3D locations from depth maps.

Once $\mathbf{W}$ is estimated, we use the knowledge from Observation~\ref{obs:eigen} that its $3$ eigenvectors with $\lambda = 1$ that are different from $\mathbf{1}$ are in fact
\begin{equation}
\label{eq:discover}
\mathbf{v}_1 = \mathbf{p}_x, \qquad \mathbf{v}_2 = \mathbf{p}_y, \qquad \mathbf{v}_3 = \mathbf{p}_z .
\end{equation}
By computing its eigenvectors $(\mathbf{v}_1, \mathbf{v}_2, \mathbf{v}_3)$ with $\lambda = 1$, such that $\mathbf{v}_i \neq \mathbf{1}$ (the constant vector of all ones), we \emph{discover} $\mathbf{p}_x, \mathbf{p}_y, \mathbf{p}_z$---the actual (scaled) 3D positions of the points, up to the affine change of basis that mixes $\{\mathbf{p}_x,\mathbf{p}_y,\mathbf{p}_z,\mathbf{1}\}$ within the eigenspace (the classical affine ambiguity of orthographic structure from motion~\cite{koenderink1991affine,weinshall1995linear}, upgradable to metric with standard constraints). 

The algorithm for \textbf{Spectral 3D reconstruction without monocular depth maps} becomes, in a nutshell: 1) estimate $\mathbf{W}$ from many near-orthographic 2D views by least squares; 2) read the 3D structure off the $\lambda = 1$ eigenvectors of $\mathbf{W}$.

\section{Conclusion}

We started by developing a method that turns three noisy but ubiquitous components---2D point tracking, monocular depth, and semantic segmentation---into a robust estimator of full 3D pose in a fixed world frame. Its correctness rests on a simple but non-trivial invariance property of affine combinations in homogeneous coordinates (Property~\ref{prop:lincomb}), and its robustness on redundancy: many semantically chosen anchors, many frames, and least squares.

Turning the representation on the anchors themselves then produced the linear geometric auto-encoder $\mathbf{P} = \mathbf{W}\mathbf{P}$ of Section~\ref{sec:lgae}: a view-invariant weight matrix whose $\lambda = 1$ eigenspace is the scene geometry itself, usable both to denoise 3D points and meshes by power iteration and to reconstruct 3D structure from near-orthographic 2D projections alone.

Section~\ref{sec:lgae-apps} then turned the auto-regression into a reconstruction method: restricting the weights to be non-negative---always possible for points inside the convex hull of the anchors---and keeping the hull vertices fixed through identity rows makes $\mathbf{W}$ row-stochastic, so its spectral radius is one and its dominant eigenspace is spanned, under the stated multiplicity assumption, by the four homogeneous coordinate vectors $\{\mathbf{p}_x, \mathbf{p}_y, \mathbf{p}_z, \mathbf{1}\}$. 

The resulting algorithm is simple: estimate $\mathbf{W}$ by least squares, then recover the 3D configuration of the points in a common coordinate system by power iteration with deflation, or equivalently from the four smallest singular vectors of $\mathbf{I} - \mathbf{W}$, up to a global affine ambiguity and without estimating any camera pose. 

The last two algorithms follow from the two ways of obtaining $\mathbf{W}$: estimated from noisy monocular depth over several frames, it auto-corrects depth-lifted points and meshes by power iteration; estimated from near-orthographic 2D projections alone, it reconstructs 3D structure from 2D correspondences without any depth input. In both cases every point is constrained jointly by all the others, rather than by an independent per-frame depth estimate.

Natural next steps are an experimental validation on standard tracking benchmarks, an analysis of the sensitivity of the weights to depth noise as a function of $K$ and $N$, the ensemble combination with standard trackers of Remark~\ref{rem:ensemble}, and a study of the spectral reconstruction of Section~\ref{sec:lgae-apps} against factorization-based structure from motion~\cite{tomasi1992shape} in noise, perspective, and missing-data regimes.

\section{Final Remarks}

Please note that none of our derivations and algorithms use the camera pose or the camera projection matrix. The only camera-related quantity that appears anywhere is the intrinsic matrix $\mathbf{K}$, used in \eqref{eq:lift} to back-project noisy monocular depth into 3D, and even this is not needed by the last method. Our methods rest solely on the property that 3D points can be written as stable linear combinations of each other, regardless of the camera pose (Property~\ref{prop:lincomb}). This is in striking contrast with existing 3D geometry reconstruction methods, which estimate the camera motion, explicitly or implicitly, as part of the solution. We rely only on noisy monocular depth estimation, which has become available only in recent years, while the very last method---presented in Section~\ref{sec:Spec_3D_2}---makes the assumption that the linear combination between 2D points in the image is the same as the one between their 3D counterparts. This is true under orthographic or weak perspective projection and, again very importantly, it holds without having to explicitly know or even consider the camera viewpoint and projection matrix. All the calculations, methods, and results are independent of the camera; our auto-regression matrix $\mathbf{W}$ depends only on the coordinate system created implicitly by the 3D points themselves.

Thus $\mathbf{W}$ is the same regardless of the camera viewpoint and regardless of the chosen world coordinate system.

\section{Acknowledgements}

The author would like to thank Dragos Costea and Emanuela Haller for 
helpful discussions and valuable insights into this work.
The work is supported by projects
``Romanian Hub for Artificial Intelligence -- HRIA'',
Smart Growth, Digitization and Financial Instruments
Program, 2021-2027 (MySMIS No. 334906), 
and ``European Lighthouse of AI for Sustainability -- ELIAS'', Horizon Europe program (Grant No.
101120237).

\bibliographystyle{unsrt}
\bibliography{refs}

\end{document}